     \def\RR{\mathbb{R}}
 \newcommand{\Reg}{\emph{Reg}\xspace}
 \newcommand{\Loc}{\emph{Loc}\xspace}
\newtheorem{theorem}{Theorem}
\newcommand\thefontsize{The current font size is: \f@size pt}
\def\BibTeX{{\rm B\kern-.05em{\sc i\kern-.025em b}\kern-.08em
    T\kern-.1667em\lower.7ex\hbox{E}\kern-.125emX}}
\begin{document}

\title{Accelerating Data Loading in Deep Neural Network Training}

\author{\IEEEauthorblockN{Chih-Chieh Yang}
\IEEEauthorblockA{\textit{Data Centric Systems} \\
\textit{IBM T. J. Watson Research Center}\\
Yorktown Heights, NY, USA \\
chih.chieh.yang@ibm.com}
\and
\IEEEauthorblockN{Guojing Cong}
\IEEEauthorblockA{\textit{Data Centric Systems} \\
\textit{IBM T. J. Watson Research Center}\\
Yorktown Heights, NY, USA \\
gcong@us.ibm.com}
}

\maketitle

\begin{abstract}
Data loading can dominate deep neural network training time on large-scale systems. We present a comprehensive study on accelerating data loading performance in large-scale distributed training. We first identify performance and scalability issues in current data loading implementations. We then propose optimizations that utilize CPU resources to the data loader design. We use an analytical model to characterize the impact of data loading on the overall training time and establish the performance trend as we scale up distributed training. Our model suggests that I/O rate limits the scalability of distributed training, which inspires us to design a locality-aware data loading method.  By utilizing software caches, our method can drastically reduce the data loading communication volume in comparison with the original data loading implementation. Finally, we evaluate the proposed optimizations with various experiments. We achieved more than 30x speedup in data loading using 256 nodes with 1,024 learners.
\end{abstract}

\begin{IEEEkeywords}
machine learning, distributed training, scalability, data loading, data locality
\end{IEEEkeywords}

\section{Introduction}

Deep Neural Network (DNN) models work incredibly well in real world scenarios, such as image classification, speech recognition, and autonomous driving. However, DNN training can  take a long time --- days, weeks, or even months --- which makes it difficult to optimize and re-train models. Researchers have devoted much effort into speeding up DNN training. On the hardware side, vendors incorporated stronger machine learning capabilities to processor architecture design, and introduced special-purpose accelerators for machine learning~\cite{jouppi2017datacenter}. On the software side, researchers developed optimized libraries such as CUDNN~\cite{chetlur2014cudnn} and MKL-DNN~\cite{mkldnn2019}; created easy-to-use frameworks such as Caffe~\cite{jia2014caffe}, PyTorch~\cite{paszke2017automatic} and Tensorflow~\cite{tensorflow2015whitepaper}; and invented new learning algorithms that converge faster. They also ran DNN training on large-scale high-performance computing (HPC) systems, which leads to many interesting research problems. For example, scaling up while maintaining convergence rate, finding a good computation-to-communication ratio, and synchronizing results efficiently. Finding solutions to these problems has reduced training time immensely --- take Imagenet-1K~\cite{ILSVRC15} training with ResNet50~\cite{he2016deep} model for example, the state of the art training time is reduced from an hour~\cite{goyal2017accurate} to minutes~\cite{You_2018,jia2018highly,ying2018image,mikami2018imagenetresnet50} within a year.

In large-scale distributed DNN training, we can break down the training time into three major components: computation time, communication time and data-loading time. 
While the former two draw great attentions from researchers, data-loading time is often omitted in the literature, since different techniques exist that circumvent the data-loading problem. For example, it is possible to cache a dataset entirely in fast local storage such as SSD or DRAM, instead of loading data from a network-based file system that has higher I/O overhead. One can also preprocess a dataset to reduce its size so it fits in a local storage, if the original dataset is too large. However, such techniques do not always apply. Fast local storage may not fit the whole dataset even after preprocessing, and preprocessing may take a long time and cause loss of pertinent information.

Considering common usage scenarios in HPC, it is important to design efficient methods to load data from a network-based file system or a data server, so that the data loading time does not become a bottleneck in DNN training. In this work, we propose data loader optimizations and bandwidth requirement optimizations to significantly improve data loading time in large-scale distributed DNN training. We evaluated the proposed methods on PyTorch, and found our methods can provide more than 30x speedup in data loading using 256 nodes with 1,024 learners. For the Imagenet-1K classification, our optimizations give 92\% improvement in per epoch training cost over the regular distributed training implementation. 

Our contributions are as follows:
\begin{itemize}
\item Data loader optimizations that improve data loading cost;
\item A locality-aware data loading method which greatly reduces bandwidth requirement and improves the scalability;
\item An analytical model that models the cost and establishes the performance trend in different system scales;
\item A performance evaluation that showcases the effects of our proposed optimizations.
\end{itemize}

The rest of the paper is organized as follows. 
Section~\ref{sec:background} describes necessary background information.
In Section~\ref{sec:data-loader-optimizations}, we present data loading optimizations that better utilize CPU to reduce overhead. Next, we explain the performance trend when scaling up with an analytical model in Section~\ref{sec:performance-model}. 
In Section~\ref{sec:locality-aware-data-loading}, we propose a locality-aware data loading method which greatly reduces the bandwidth requirement of mini-batch SGD. In Section~\ref{sec:experiments}, we show improvements brought by our optimizations. In Section~\ref{sec:related-work}, we summarize related works. Finally, we draw the conclusions and describe the future work in Section~\ref{sec:conclusion}.

\section{Background}\label{sec:background}

\subsection{Mini-batch SGD}\label{sec:sgd}

Mini-batch stochastic gradient descent (SGD)~\cite{dekel2012optimal} approximates the optimal solution of an objective function. It iteratively feeds a mini-batch (i.e. a set of random samples of a dataset) to a neural network model for forward propagation, and computes the loss from the output and the target values. Next, it performs backpropagation to compute gradients, which are then used to update the model weights. The frequency of model updates, depending on the choice of mini-batch sizes, can affect the convergence rate. 

In this context, we use a \emph{step} to refer to training a single mini-batch, and an \emph{epoch} to refer to training the whole dataset in multiple steps. Depending on the dataset and the model, it takes various number of epochs for the training to converge.

Mini-batch SGD can be parallelized in a data-parallel fashion. A typical implementation uses multiple distributed processes (hereafter referred to as \emph{learners}), each with a copy of the model. The learners perform a step of mini-batch SGD collectively with the following procedure: 

\begin{enumerate}
\item Each learner acquires the same global mini-batch sequence (a sequence of sample indices instead of the actual samples) that all learners will collectively load.
\item Each learner takes an even-sized disjoint slice of the global mini-batch sequence. 
\item Each learner loads samples of its slice from the data source (e.g. a network file system) to form a local batch.
\item Each learner trains with its local batch independently to compute local gradients.
\item All learners synchronize (i.e. all-reduce) to produce the global gradients of the current step. 
\item Each learner updates the model weights with the same global gradients.
\end{enumerate}

The above mentioned procedure is \emph{synchronous} mini-batch SGD, as the model is globally synchronized in each step. This data-parallel approach scales well when the mini-batch size is large enough to have a good compute-to-communication ratio. 
However, two issues exist: (1) In general, using a larger mini-batch makes it harder to generalize; (2) Synchronization overhead increases with the number of learners. 
There has been rigorous research work on the two issues, and the recent results in~\cite{Kurth_2017,kurth2018exascale} show that the distributed SGD can scale to extreme scale systems.

\subsection{Cost and scalability of distributed training}\label{sec:cost-of-distributed-training}

Considering the cost of a single step in synchronized mini-batch SGD training, the are three major components: computation time (forward propagation and backpropagation), communication time (synchronization of gradients), and data loading time.

Forward propagation and backpropagation to produce the local gradients take up most of the computation time in training. The actual cost depends on the complexity of the model used and the local batch size. In a small scale, the computation time usually dominates the overall cost in deep learning. Various techniques can be applied to reduce this cost, such as using optimized DNN library like CUDNN~\cite{chetlur2014cudnn} for Nvidia GPUs, and using lower precision floating arithmetic. In the foreseeable future, new hardware architectural advances will improve the computation time faster than the other components. 

In synchronous mini-batch SGD, the synchronization of gradients happens per step, since the model weights must be updated before the next step starts. There have been many recent research works on improving synchronization algorithms including~\cite{mikami2018imagenetresnet50, jia2018highly}. The gradients of different model layers can also be synchronized separately~\cite{alex2018horovod}. The communication and computation cost are tightly coupled, as there are optimization techniques such as layer-by-layer synchronizations. In the rest of this paper, we use \emph{training time} to refer to the overall cost of computations and communication, and discuss its relationship with \emph{data loading time}.

Data loading in the machine learning context refers to the actions required to move data samples from a storage location to form a batch in the memory co-located with the compute units for training. The I/O cost (typically read-only) of moving data samples depends on the bandwidth of the storage system. Other than the I/O cost, to make data usable in training, there is often some preprocessing or data augmentation needed, depending on the training requirements. Take an image classification task for example, one needs to decompress the image files, to randomly clip and resize the image, and perform other image transformations. These operations can be time-consuming. 

To understand the overall cost in deep learning applications, let us first consider a single learner case. A learner waits if data is not prepared in time, as the training progress depends on input data.  If a learner performs data loading and computation sequentially, there will be gaps between computation tasks caused by data loading. In comparison, a common practice is to, in a background process (or a thread), use \emph{prefetching} to overlap data loading with training. The data loading overhead can be partially or completely hidden. 

Now, let us consider scaling up distributed training of a fixed-sized input dataset. Suppose our training is data parallel, when we get more computing resources, the training time decreases, as the computation can easily be parallelized. In contrast, the data loading cost also decreases initially, because more processes participate in preprocessing the same amount of data, and more nodes can load data in simultaneously, which increases the effective bandwidth. However, the bandwidth of the storage system is eventually upper-bounded.

\begin{figure}[t]
\centering
\includegraphics[width=.45\textwidth]{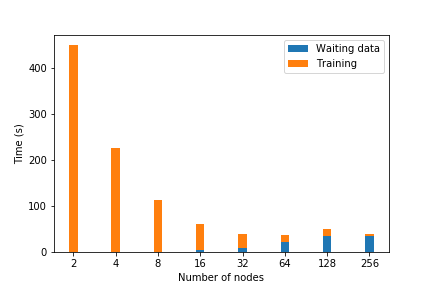}
\caption{Average epoch time to train ResNet50 with Imagenet-1K dataset in different scales on LLNL Lassen. The cost stopped decreasing when the data loading overhead stopped scaling. }
\label{fig:bandwidth-limitation}
\end{figure}

Figure~\ref{fig:bandwidth-limitation} shows the data loading scalability problem. On LLNL's Lassen system, we used distributed processes to train ResNet50 with Imagenet-1K dataset. There were four learners per node. The batches were globally and randomly shuffled, and the local batch size of a learner was fixed at 128. The global batch size increases with the number of nodes.

We measured the average \emph{training time} per epoch as shown in the orange bars; and the average \emph{waiting time} for data to be ready for training as shown in the blue bars. The sum of the two is the average cost of an epoch. Since data loading is overlapped with training, the time to wait for data would appear only when data loading overhead was not completely hidden.  
We can see that for 2, 4, and 8 nodes, the waiting for data was minimal and the performance scaled well. However, while the training time kept decreasing with more participating nodes, the data loading cost could no longer be fully hidden. The waiting time was significant starting from the 16-node case, and eventually dominated the cost as we added more nodes.
This is because even though the load volume per node decreased while scaling up, the overall data supplying rate could not catch up with the consuming rate, as a result, the data loading time stopped decreasing, and the cost stopped scaling down.

These observations motivated us to attack the data loading problem from two different angles: (1) reducing the data loading cost to result in an overall improvement for all cases, and (2) reducing the data loading volume, so that the storage system's limited capability becomes less of a problem. We address the former aspect in Section~\ref{sec:data-loader-optimizations} and the latter in Section~\ref{sec:locality-aware-data-loading}.

\section{Data loader optimizations}\label{sec:data-loader-optimizations}

To optimize data loading cost, we need to identify the overhead of data loading in a finer granularity. An illustrative typical learner execution timeline, similar to visualization of profiling tools such as nvprof, is shown in Figure~\ref{fig:execution-timeline}. Each timeline represents a computing resource. The colored bars in a timeline denote tasks being performed and the white space represents idling. 

The main process (the middle timeline) drives the training progress and interacts with the data loader worker to prescribe batch-loading requests and retrieve data; it also interacts with GPU to perform computations. As illustrated, if we consider only the training time meaningful work, there can be overhead due to data loading. In the following subsections, we discuss different optimization strategies that address different overheads.

\noindent
\begin{figure*}[th]
\centering
\includegraphics[width=.8\textwidth]{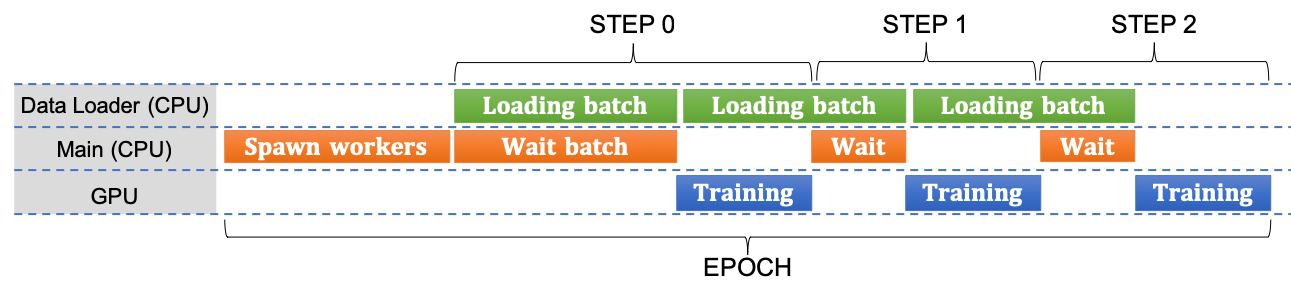}
\caption{Illustrative execution timeline of a learner.}
\label{fig:execution-timeline}
\end{figure*}

\subsection{Multiprocessing}\label{sec:optimizations-multiprocessing}

The time to load a batch can be significant, as illustrated in Figure~\ref{fig:execution-timeline} as the green bars. While we can use a background worker to prefetch batches to hide some overhead, adding more workers can overlap the loading of different batches and improve performance further.

PyTorch data loader implementation can spawn concurrent background worker processes to load multiple batches in parallel and maximize data loading throughput. The main process communicate with workers through \texttt{multiprocessing.Queue} instances. The main process prefetches data by submitting more batch-loading requests than its immediate demand.
When using more workers, the throughput increases because workers simultaneously load samples from the data source, increasing the effective bandwidth. It also parallelizes the preprocessing of batches. 

\begin{figure}[t]
	\centering
	\includegraphics[width=.40\textwidth]{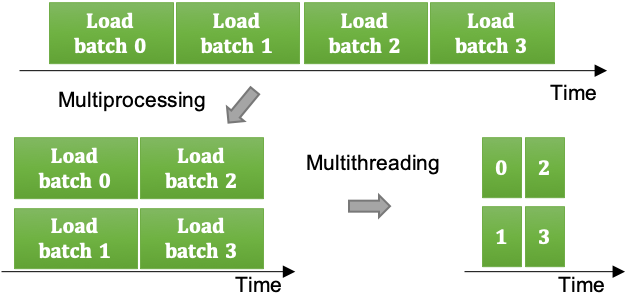}
	\caption{Parallelizing data loading. Multiprocessing allows overlapping of batch loading while multithreading further reduces batch loading time by utilizing parallelism within a batch.}
	\label{fig:parallelization-optimizations}
\end{figure}

\subsection{Multithreading}\label{sec:optimizations-multithreading}
While multiprocessing loads multiple batches in parallel, there exists untapped parallelism within the loading of a batch. It is often the case that the preprocessing of individual samples can all be done independently in parallel. Thus, we can use multithreading to parallelize sample preprocessing within a batch, so that the loading time can reduce further. Figure~\ref{fig:parallelization-optimizations} illustrates the effects of multiprocessing and multithreading. Multiprocessing overlaps batch loading across processes, while multithreading within a worker shortens loading time per batch by preprocessing samples in parallel.
 
While multithreaded data loading exists in other deep learning frameworks, in Pytorch, we have to modify the data loader implementation to create a \texttt{ThreadPoolExecutor} instance along with a data loader worker. Instead of loading samples and preprocessing them one-after-one sequentially in a single thread, we use \texttt{ThreadPoolExecuter.map()} to load samples in a batch concurrently. Note that, due to the python global interpreter lock (GIL) issue\cite{pywikiglobalinterpreterlock}, multithreading works well only if the preprocessing pipeline stages call native library routines and release GIL correctly. In experiments of Imagenet-1K training, system calls such as file I/O and the image transformations do release GIL, and we can see significant performance improvement with multithreading (see Section~\ref{sec:effects-of-optimizations}).

\subsection{Caching}\label{sec:optimizations-caching}

The data access pattern of mini-batch SGD is repetitive and random. The learners collectively load the same training dataset with a randomized sequence every epoch. Since the samples are reused, there is temporal locality that we can utilize to improve performance.

We can allocate a software cache either in memory or in a high speed local storage such as an SSD of a compute node to store samples that have been loaded in earlier epochs. The cached samples will be used again in subsequent epochs. With a cache hit, a learner loads a sample with a much shorter latency. It also reduces the number of accesses to the storage system, making the I/O bandwidth less likely to be saturated.

While caching in memory grants optimal data access time,  training datasets that are too large to fit in the local DRAM can be cached in SSDs. For very large datasets that do not fit in the local cache, caching a partial subset locally can still improve performance although the improvement can be limited.  For example, considering a compute node that caches 10\% of the training dataset, the cache hit rate is 0.1. In other words, 90\% of the samples are still loaded from the storage system. 

To avoid being limited by the local cache size, all the participating compute nodes can share their local caches with each other to form an \emph{aggregated cache} that is many times larger than individual caches, similar to the high-speed parallel data staging method mentioned in~\cite{kurth2018exascale}. With the aggregated cache, compute nodes may cache disjoint partitions of a large dataset. We refer to this technique as \emph{distributed caching}.

Distributed caching changes data-loading in mini-batch SGD. 
During training, a sample load can be a local cache hit, a remote cache hit, or a cache miss  served by the storage system. The local cache hit rate is likely to be small, assuming the local cache only holds a small subset of the whole training dataset. But the remote cache hit rate can be very high, if the aggregate cache holds most of the dataset. The cache miss rate can be zero, if the whole training dataset is collectively cached. The technique bears a lot of similarities to the hierarchical hardware caches in modern processor architectures. 

With distributed caching, learners can exchange cached samples to create their local batches (i.e. a slice of a mini-batch sequence). The exchanges utilize the high-speed network among compute nodes instead of loading from the storage system. In this way, the storage system bandwidth is no longer a bottleneck. However, the bandwidth among compute nodes, albeit typically larger, can still be a limiting factor, because the data loaded collectively per epoch is still close to the whole dataset size. We model the cost of distributed training in the next section, and in Section~\ref{sec:locality-aware-data-loading}, we propose a new data loading method that reduces the bandwidth requirement of distributed mini-batch SGD.

\section{Performance Model}\label{sec:performance-model}

We present a simple analytical model to help analyze the cost in different system scales. The model contains the following parameters (we denote uppercase letters to represent constant values and lowercase letters to represent variables):

\begin{itemize}
  \item[$D$:] the dataset size. To simplify the analysis, we assume that the data samples are the same size, and $D$ equals the total number of samples in the dataset.
  \item[$p$:] the number of participating compute nodes.
  \item[$V$:] the maximum training rate of a compute node. 
  \item[$R$:] the I/O rate from the storage system. We let this be the maximum loading rate.
  \item[$R_{c}$:] the I/O rate from the remote caches. We can reasonably assume $R_{c}$ is much larger than $R$ due to the high-speed interconnection that HPC systems typically have. 
  \item[$U$:] the maximum preprocessing rate of a compute node. 
  \item[$\alpha$:] the ratio of the cached subset (in the aggregated cache) to the whole dataset. 
\end{itemize}

From the discussion in Section~\ref{sec:cost-of-distributed-training}, we know that when data loading is overlapped with training, the true overall cost is the larger of the training cost and the data loading cost. The training cost and the data loading cost of an epoch can then be derived from: 
\begin{align}
\textsf{Training time} &= \frac{D}{p \cdot V} \label{eq:training-time}\\
\textsf{Sample I/O time} &= \frac{D}{R} \label{eq:sample-load-time}\\ 
\textsf{Sample preprocessing time} &= \frac{D}{p \cdot U} \label{eq:preprocessing-time}\\
\textsf{Data loading time} &= (\ref{eq:sample-load-time}) + (\ref{eq:preprocessing-time}) \notag\\
                          &= \frac{D}{R} + \frac{D}{p \cdot U} \label{eq:data-loading-time}
\end{align}

Now, let us revisit Figure~\ref{fig:bandwidth-limitation}. There was only data loading without training in the experiment. In the plot, the data loading cost was high initially when there were few nodes, but it decreased when more nodes were added until it hit a plateau. In (\ref{eq:data-loading-time}), the preprocessing time decreases when $p$ increases. It eventually becomes insignificant, relative to the constant sample I/O time. And the data loading costs at least $\frac{D}{R}$ which is a constant. This explains the plateau.

We can also determine the true cost by comparing the training cost and the data loading cost. To simplify the analysis, we assume that the preprocessing rate is much higher than training rate (i.e. $U \gg V$), and $p$ is also large enough so that the preprocessing cost is relatively insignificant. Thus, we only concern the relationship between (\ref{eq:training-time}) and (\ref{eq:sample-load-time}). If the training time dominates the true cost:
\begin{align}
(\ref{eq:training-time}) & \geq (\ref{eq:sample-load-time}) \notag \\
\frac{D}{p \cdot V} & \geq \frac{D}{R} \notag \\
p & \leq \frac{R}{V} \label{eq:condition}
\end{align}

From (\ref{eq:condition}), we know that for small $p$, the training time dominates. As $p$ increases, more computing resources can be used to reduce the training cost while the sample I/O time remains constant. The \emph{true cost} per epoch can thus be expressed as the following:

\begin{equation}
\textsf{True cost} = \begin{cases}\begin{aligned}
 \frac{D}{p \cdot V} & \quad \textsf{for} & p & \leq \frac{R}{V}  \\
 \frac{D}{R}          & \quad \textsf{for} & p & > \frac{R}{V} 
\end{aligned}
\end{cases}
\end{equation}

Now, let us consider distributed caching. We assume that the cost of local cache hits is insignificant. The optimization does not change the training cost or the preprocessing cost but only affects the sample I/O cost:
\begin{align}
\textsf{Sample I/O time} &= 
\underbrace{\frac{(1-\alpha) \cdot D}{R}}_{\textsf{Storage system}} + 
\underbrace{\frac{\alpha \cdot D }{R_{c}}}_{\textsf{Remote caches}} \cdot \underbrace{\frac{p - 1}{p}}_{\textsf{Local cache miss}}  \label{eq:sample-load-time-caching} 
\end{align}

We can know two things from (\ref{eq:sample-load-time-caching}): (a) The local cache miss rate $\frac{p-1}{p}$ can be very high when $p$ is large so local cache hits do not help very much although they are fast; (b) Both $\alpha$ and $R_{c}$ have to be large for distributed caching to perform very well. While scaling up, it is easy to have a large $\alpha$ since scaling up increases the amount of aggregated memory to store a fixed-sized dataset. $R_{c} \gg R$ is also a reasonable assumption in modern HPC systems. However, $R_{c}$ does not grow linearly with $p$, and eventually the performance scaling is limited by the bandwidth among compute nodes.

\section{Locality-aware Data Loading}\label{sec:locality-aware-data-loading}

The optimizations described in Section~\ref{sec:data-loader-optimizations} improve the data loading rate of learners. However, as illustrated in both in the experiment (Figure~\ref{fig:bandwidth-limitation}) and through performance modeling, a distributed learning application scales only as far as the storage system's capability allows. We need a way to reduce the bandwidth requirement of distributed DNN training to overcome the limitation.

In the rest of this section, we describe a data loading method which adds a locality-aware flavor to distributed caching. It not only reduces data loading from the storage system, but also minimizes overall data loading volume to a fraction of the dataset size. 

Instead of exchanging among learners to form designated mini-batch slices, learners can assemble a mini-batch from their locally cached samples to greatly reduce data loading. A key property of SGD makes it possible: for a given global mini-batch sequence, as long as all the samples in such sequence are used in the training step, the ordering of the samples within the global batch does not affect the training results after synchronization (i.e. all-reduce operation).

\subsection{Methodology}\label{sec:methodology}

As in distributed caching, learners must populate their local caches before the locality-aware data loading method can be applied. This can either be a cache populating phase before training, or caching the samples loaded from the storage system on-the-fly during the first epoch. As long as the cached subsets are disjoint, how samples are cached is not important, since the mini-batch sequences are randomly shuffled. However, it may be advantageous to populate the caches in a way that sample locations (i.e. the nodes samples are cached) can be easily determined to avoid extra book-keeping. We assume a \emph{cache directory} exists for tracking sample locations, and the directory is duplicated across all learners and stays the same (i.e. no cache replacement) after populating caches in the first epoch. 

With locality-aware data loading method, in a training step, a learner goes through a given predefined global mini-batch sequence and look for samples that are cached locally and trains with them. Given the number of compute nodes $p$, if the whole dataset is evenly split among all caches, and a global mini-batch sequence is uniform-randomly sampled, a compute node should find close to $\frac{1}{p}$ of the global mini-batch in its local cache. It can then use these samples in the training step as its local batch. The results of training with this subset of the global mini-batch sequence are the learner's contribution in the training step. In Section~\ref{sec:proof}, we prove that training with this method produces equivalent results to the regular method.

When learners look for samples locally cached, they may find themselves caching varying sized subsets of the global mini-batch sequence. In other words, the sample distribution can be imbalanced. Letting learners train with imbalanced local batches, while giving the same training results and potentially performing zero remote data loading, can cause some learners to become stragglers and increase the training time of a step in synchronous SGD. We need load balancing for optimal performance, and we discuss this further in Section~\ref{sec:load-imbalance}.

Assuming the caches have been populated with samples, the procedure of locality-aware data loading is as follows:

\begin{enumerate}
\item Get a global mini-batch sequence that is the same across all learners.
\item Determine the sample distribution of the global mini-batch among the distributed learners.
\item Determine data loading for either samples missing from the aggregated cache or load balancing.
\end{enumerate}

First, all learners get the same global mini-batch sequence. 
Next, each learner independently goes through the global sequence and determines the sample distributions by looking up the cache directory. 
Then, the learners need to agree on how to load samples locally so that they collectively assemble the global mini-batch. Samples not in the caches are loaded from the storage system. As for load-balancing, the learners can exchange data to achieve load balance, or they can load from the storage system. If learners exchange samples for load balancing, it creates point-to-point communication traffic.  We provide both theory and simulation results in Section~\ref{sec:load-imbalance} to show that this traffic is a small fraction of the data movement that the regular loading method requires.

\begin{figure}[t]
\centering
    \begin{minipage}{.49\textwidth}
        \centering
        \includegraphics[width=.4\linewidth]{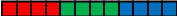}
        \caption{Conventional method: learners load even-sized slices.}
        \label{fig:minibatch-slices}
    \end{minipage}    

    \vspace{1em}    
    
    \begin{minipage}{.49\textwidth}
        \centering
        \includegraphics[width=.4\linewidth]{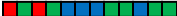}
        \caption{Locality-aware method: sample distribution in learner caches.}
        \label{fig:sample-distribution}
    \end{minipage}
\end{figure}

Figure~\ref{fig:minibatch-slices} and~\ref{fig:sample-distribution}  illustrate the differences between using the regular mini-batch SGD and the locality-aware method. We assume 3 learners --- Red, Green, and Blue --- collectively load a global mini-batch of 12 samples. In the regular method (Figure~\ref{fig:minibatch-slices}), the global mini-batch sequence is split into multiple slices, and each learner loads a slice of 4 samples to train with before synchronization. In the locality-aware method, the learners look for the \emph{locally cached} samples that belongs to the mini-batch. In Figure~\ref{fig:sample-distribution}, Red has 2 samples, Green has 6 samples, and and Blue has 4 samples in their local caches. A way to balance the load is to let Red load 2 samples from Green before training. The total volume loaded for this global mini-batch is 2$\div$12 $\approx$ 17\% of the regular method.

\subsection{Proof of equivalence}\label{sec:proof}

Here we give a formal proof that the locality-aware data loading method produces the same results as the regular loading method. Consider the following optimization problem solved by mini-batch SGD:
\begin{equation*}
	\min\limits_{w \in \mathcal{X}} F(w)
\end{equation*}
where $F:\RR^m \rightarrow \RR$ is continuously
differentiable but not necessarily convex over $\mathcal{X}$, and
$\mathcal{X}\subset \RR^m$ is a nonempty open subset. The objective
$F$ can be seen as the empirical risk \\ $F(w) = n^{-1}\sum_{i=1}^n
g_i(w, x_i)$. Here $x_i$,  $1\leq i \leq n$, are data samples.  

The regular data loader and locality-aware dataloader implement two
sampling schemes for $P$ learners. We call the regular one \Reg, and
the locality-aware one \Loc. 
\begin{theorem} 
\label{th:equiv} 
Assuming the same sequence of random numbers are generated
  for \Reg and \Loc, then distributed minibatch SGD produces the same
  $w$ with both sampling schemes after the same number of training steps.
\end{theorem}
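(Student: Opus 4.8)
The plan is to argue by induction on the training step $k$, with the invariant that after step $k$ every learner holds the same weight vector $w_k$ under \Reg and under \Loc. The base case holds because both schemes initialize from the same $w_0$ drawn from the shared random stream. For the inductive step, assume $w_k$ coincides across the two schemes. Because \Reg and \Loc draw from the same random numbers and both begin a step by forming one common global mini-batch (step~1 of each procedure), the two schemes operate on the \emph{identical} global mini-batch at step $k$; write it as a multiset of sample indices $S_k$ with $|S_k| = b$ equal to the global batch size. The only thing that differs between the schemes is the assignment of the samples of $S_k$ to the $P$ learners.

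I would then make precise that in both schemes the per-learner sample sets form a partition of $S_k$. Under \Reg, learner $j$ owns an even-sized disjoint slice $S_k^{(j)}$, and by construction $\bigsqcup_{j=1}^{P} S_k^{(j)} = S_k$. Under \Loc, learner $j$ owns the set $\widetilde S_k^{(j)}$ consisting of the members of $S_k$ found in its local cache, adjusted by the load-balancing transfers; one checks that (i) the cached subsets are disjoint and a sample missing from the aggregate cache is fetched once and charged to exactly one learner, and (ii) each load-balancing transfer is conservative, removing a sample from the sender's owned set and adding it to the receiver's, so that again $\bigsqcup_{j=1}^{P}\widetilde S_k^{(j)} = S_k$. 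Hence in both schemes the collection of owned sets is a partition of the \emph{same} multiset $S_k$.

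Next I would express the step-$k$ update through the aggregated gradient. Each learner evaluates, at the common weights $w_k$, the \emph{sum} of the per-sample loss gradients over the indices it owns; the all-reduce adds these partial sums, and division by $b$ yields the update direction
\begin{equation*}
 g_k \;=\; \frac{1}{b}\sum_{i \in S_k} \nabla_w g_i(w_k, x_i).
\end{equation*}
Since finite sums are commutative and associative, the right-hand side depends on $S_k$ only as a multiset and not on how it is split among learners; therefore \Reg and \Loc compute the same $g_k$, apply the same update rule to the same $w_k$, and obtain the same $w_{k+1}$, which closes the induction.

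The main obstacle I anticipate is the bookkeeping behind the two partition claims on the \Loc side --- in particular verifying that the locality lookup together with the load-balancing exchanges never duplicates or loses a sample of $S_k$, and that ``the same sequence of random numbers'' is consumed consistently (e.g. that any randomness attached to a sample is keyed to the sample rather than to the order in which a given learner happens to process it). A secondary but essential point to state explicitly is that the gradient aggregation must be of the sum-then-normalize form above, not an unweighted average of per-learner averages: because \Loc generally yields imbalanced local batch sizes $|\widetilde S_k^{(j)}|$, only the former makes $g_k$ invariant under re-partitioning of $S_k$. (Floating-point non-associativity of the all-reduce is outside the scope of this equivalence claim and I would not attempt to address it.)
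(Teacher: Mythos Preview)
Your proposal is correct and follows essentially the same approach as the paper: induction on the training step, invoking the shared random stream to obtain the same global mini-batch, and using commutativity of addition to conclude that the all-reduced gradient is invariant under re-partitioning among learners. You are in fact more careful than the paper on two points it glosses over --- the explicit verification that \Loc's owned sets partition $S_k$, and the necessity of sum-then-normalize aggregation rather than an unweighted average of per-learner averages when local batch sizes differ.
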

\begin{proof}

 By induction. Denote $x_t^k$ as the $k$-th ($1\leq k \leq B$) sample in the $t$-th mini-batch, where $B$ is the size of the global mini-batch.  Since the mini-batch is evenly distributed to each learner
 in a block fashion, at the $j$-th ($1\leq j \leq P$) learner $L_j$, the local mini-batch includes
 $\{x_t^{(B/P)*(j-1)+1}, \cdots, x_t^{(B/P)*(j)} \}$. Assume after
 $t=s$, $s\geq 1$, $w$ is the same under \Reg and \Loc. 
 Then at step $s+1$, \Reg produces a global mini-batch sequence $\{x_{s+1}^1, x_{s+1}^2,
 \cdots, x_{s+1}^B\}$. 
 
With \Reg the global mini-batch sequences are block-distributed, and at learner $L_j$, the local batch is: 

\begin{align*} 
\{x_{s+1}^{(B/P)*(j-1)+1}, x_{s+1}^{(B/P)*(j-1)+2}, \cdots, x_{s+1}^{(B/P)*(j)} \}
\end{align*}

So the local gradient is:
 
\begin{align*} 
 \nabla F(w:\{x_{s+1}^{(B/P)*(j-1)+1}, x_{s+1}^{(B/P)*(j-1)+2}, \cdots, 
 x_{s+1}^{(B/P)*(j)}\})  \\ 
 =\sum_{jj} \nabla F(w:x_{s+1}^{(B/P)*(j-1)+jj})
\end{align*}
 
And the global gradient after reduction is: 

\begin{equation*} 
\nabla_{\Reg} =\sum_j \sum_{jj} \nabla
 F(w:x_{s+1}^{(B/P)*(j-1)+jj})
\end{equation*}

  Since \Loc uses the same random number sequence, it produces the 
 same global mini-batch sequence $\{x_{s+1}^1, x_{s+1}^2,
 \cdots, x_{s+1}^B\}$ as \Reg. However, due to locality optimization, the
 sequence is not distributed to the learners in a block fashion. In
 fact, the local batch may actually have different
 sizes. From the convergence perspective, locality-aware optimization in effect permutes the sampling sequence $\{x_{s+1}^1, x_{s+1}^2,
 \cdots, x_{s+1}^B\}$ into $\{x_{s+1}^{g_1}, x_{s+1}^{g_2},
 \cdots, x_{s+1}^{g_B}\}$, and distributes it unevenly in a block
 fashion to the learners. Suppose learner $L_j$ gets samples $g_{j_b}$ to $g_{j_e}$.  Then the local gradient is: 
\begin{align*}
 \nabla F(w:\{x_{s+1}^{g_{j_b}}, x_{s+1}^{g_{j_b}+1}, \cdots,
 x_{s+1}^{g_{j_e}}\})=\sum_{g_{j_b}\leq jj \leq g_{j_e}} \nabla
 F(w:x_{s+1}^{jj})
\end{align*} 

And the global gradient after reduction:
\begin{align*}
 \nabla_{\Loc} =\sum_j \sum_{g_{j_b}\leq jj \leq g_{j_e}} \nabla
 F(w:x_{s+1}^{jj})
\end{align*} 
 
By the commutative law of addition, $\nabla_{\Loc} = \nabla_{\Reg}$. Therefore, $w_{s+1}$ of the two methods are the same. Obviously, the base $w_1$ is the same for both sampling schemes. This
completes our proof. 
\end{proof}

Theorem~\ref{th:equiv} shows that our locality-aware data loading scheme produces the same gradients as the original approach for each step in distributed SGD.  In current practice, batch normalization is frequently used to improve training accuracy and time.  In theory, batch normalization should be applied to the whole mini-batch. In this case, Theorem~\ref{th:equiv} still holds.  If batch normalization is applied to each local part of the mini-batch, the mean and the variance are obviously different from the original data loading scheme. However, from the training perspective, the impact of our locality-aware scheme on batch normalization is similar to that of using a different random permutation sequence.  It should have minimal impact on training results. This is confirmed by our experimental results. 

\subsection{Load Imbalance}\label{sec:load-imbalance}

Here, we discuss the load imbalance of the locality-aware data loading method. We first analyze the data imbalance among the caches for the learners. The distribution of data samples in a global mini-batch to caches is a random process. To characterize the amount of data samples of a global mini-batch in the cache of a certain learner, we consider the process of uniformly-at-random placing $b$ balls in $p$ bins. Let $M$ be the random variable that counts the maximum number of balls in any bin. Then $Pr[M>K_\alpha] = o(1)$ for $\alpha>1$ and $K_\alpha = \frac{b}{p} + \alpha \sqrt{2\frac{b}{p} \log p}$, with $p\log p \ll b \leq  p \cdot \text{polylog}(p)$ (see Theorem 1 of~\cite{raab1998balls}).  The imbalance in the amount of data samples for a mini-batch in theory is unlikely to be large. 

\begin{figure}[t]
\centering
\includegraphics[width=.40\textwidth]{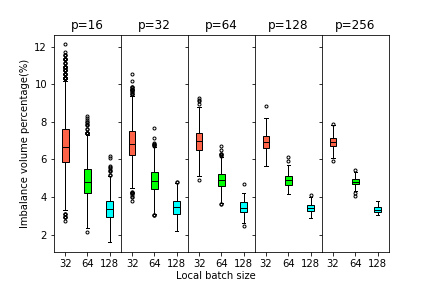}
\caption{Simulated imbalance of the global mini-batch sample distribution in distributed caching. p is the number of compute nodes.}
\label{fig:sample-distribution-imbalance}
\end{figure}

We ran simulations to show the traffic volume needed to balance the batch samples. Different local batch sizes and different number of compute nodes are used. The simulation started with a fixed sized dataset evenly partitioned and distributed to $p$ compute nodes. Then, mini-batch sequences were generated, and the sample distributions were determined. The imbalance traffic volume percentage is calculated by summing the deficits of every learner and then divided by the mini-batch size. We collect the imbalance numbers of many steps and render the box plot as shown in Figure~\ref{fig:sample-distribution-imbalance}. 

We can make two observations from the figure. First, the imbalance depends on the local batch size.  For example, the green boxes are the results of the same local batch size 64. And they have very close median values across different configurations. The same applies to the other local batch sizes. Second, the imbalance is in general a small percentage for moderate to large local batch sizes. The median values of the imbalance percentage for the local batch size 32, 64, and 128 are approximately 6.9\%, 4.8\%, and 3.4\%, respectively.

Both the theory and simulation results show that the load imbalance of the locality aware data-loading is small. Still, imbalance in the amount of data present in the cache of each learner creates imbalance in computation time for forward and backward propagation in training. To achieve perfect loading balancing, learners with data surplus need to send some data samples to learners with deficit. These data transfers incur communication among the learners, and we want to minimize the number of transfers (since the total amount of data measured in bytes being transferred in any scheme is the same).  This optimization problem is equivalent to an existing problem and turns out to be $NP$-complete (see~\cite{chen2008minimum}). 

We propose an approximation algorithm. Its formal description is given in Algorithm~\ref{alg:balance}. In the algorithm, we build two heaps, one for learners with surplus $\mathcal{H}_s$, and the other for learners with deficit $\mathcal{H}_d$.  Each heap element contains two items, \emph{imbalance} for the current imbalance in workload, and \emph{ID} for the learner.  The algorithm greedily finds the current largest imbalanced heap elements $h_s$ in $\mathcal{H}_s$ and $h_d$ in  $\mathcal{H}_d$, and records sending min($h_s$.imbalance, $h_d$.imbalance) amount of data samples from the learner with the surplus to the learner with the deficit in the schedule list $\mathcal{S}$.  The algorithm then updates the heaps, and continue. 

\begin{algorithm}[t]
\footnotesize
\begin{algorithmic}[1]
\caption{Balance ($p$, $\mathcal{L}$)}
\label{alg:balance}
        \STATE Make a surplus heap $\mathcal{H}_s$ of all surpluses in $\mathcal{L}$ in decreasing order
        \STATE Make a deficit heap $\mathcal{H}_d$ of all deficits in $\mathcal{L}$ in decreasing order
        \STATE $\mathcal {S} \leftarrow \{\}$
        \WHILE{$\mathcal{H}_s$ is not empty}
               \STATE $h_s \leftarrow$ find-max($\mathcal{H}_s$)
               \STATE $h_d \leftarrow$ find-max($\mathcal{H}_d$)
               \STATE $m  \leftarrow$ min($h_s$.imbalance, $h_d$.imblance)
               \STATE $h_s$.imbalance $\leftarrow h_s$.imbalance $- m$
               \STATE $h_d$.imbalance $\leftarrow h_d$.imbalance  $- m$
               \STATE $\mathcal {S}$.append($h_s$.ID, $h_d$.ID, $m$)
               \IF{$h_s$.imbalance $=0$}
               \STATE heap-remove($h_s$)
               \ELSE
               \STATE heap-decrease-key($h_s$)
               \ENDIF
               \IF{$h_d$.imbalance $=0$}
               \STATE heap-remove($h_d$)
               \ELSE
               \STATE heap-decrease-key($h_d$)
               \ENDIF
                \STATE $h_s \leftarrow$ heap-find-max($\mathcal{H}_s$)
               \STATE $h_d \leftarrow$ heap-find-max($\mathcal{H}_d$)
        \ENDWHILE 
        \RETURN {$\mathcal{S}$}
\end{algorithmic}
\end{algorithm}
 
Since with each heap-find-max operation on $\mathcal{H}_s$ the imbalance of at least one learner is removed, and the heap operation takes at most $\log p$ time, it is easy to see that Algorithm~\ref{alg:balance} runs in $O (p \log p)$ time. 

\begin{theorem}
 Algorithm~\ref{alg:balance} is a 2-approximation algorithm.
\end{theorem}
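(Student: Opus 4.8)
The plan is to bound the number of transfers produced by Algorithm~\ref{alg:balance} against a lower bound on the optimal number of transfers $\mathrm{OPT}$. Write $s$ for the number of learners that hold a surplus (the initial size of $\mathcal{H}_s$) and $d$ for the number that hold a deficit (the initial size of $\mathcal{H}_d$); learners that already have exactly their $B/P$ share of the global mini-batch appear in neither heap and can be ignored. Note that the total surplus equals the total deficit, since the mini-batch has fixed size, so a feasible schedule is exactly a bipartite collection of transfers in which every surplus learner has positive out-degree and every deficit learner has positive in-degree.

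First I would establish the lower bound $\mathrm{OPT}\ge\max(s,d)$. In any feasible schedule each surplus learner must send at least once and each deficit learner must receive at least once; since every individual transfer has exactly one sending endpoint and one receiving endpoint, the number of transfers is at least $s$ and, by the symmetric argument, at least $d$, hence at least $\max(s,d)$. This is a one-line degree-counting argument and I expect it to be routine.

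Next I would bound the greedy cost from above. The key invariant, maintained throughout the while loop, is that the total remaining surplus equals the total remaining deficit: it holds at the start and each iteration subtracts the same quantity $m=\min(h_s.\mathit{imbalance},h_d.\mathit{imbalance})$ from both sides. Because every imbalance stored in either heap is strictly positive (an entry is removed as soon as it reaches $0$) and no entry is ever inserted, each iteration has $m>0$ and drives at least one of $h_s,h_d$ to zero, removing at least one element from $\mathcal{H}_s\cup\mathcal{H}_d$. Thus the loop runs at most $s+d$ times; a short case check on the final iteration (when $\mathcal{H}_s$ is about to become empty, the invariant forces $\mathcal{H}_d$ to contain a single element of equal imbalance, so both heaps empty together) sharpens this to at most $s+d-1$ transfers. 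Combining the bounds, Algorithm~\ref{alg:balance} uses at most $s+d-1\le 2\max(s,d)-1<2\,\mathrm{OPT}$ transfers, giving the claimed $2$-approximation.

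The main obstacle is the upper-bound side: one must argue that the greedy never ``wastes'' an iteration, i.e., that every iteration removes a heap element, which relies on the positivity invariant together with the bookkeeping convention that already-balanced learners and zeroed-out learners never sit in the heaps. A secondary subtlety, worth stating explicitly, is that the objective being optimized is the \emph{number} of transfers (each carrying an arbitrary amount of data), not the total bytes moved, since the latter is the same for every feasible schedule.
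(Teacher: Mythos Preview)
Your argument is correct and follows the same skeleton as the paper's proof: upper-bound the number of greedy transfers by observing that each iteration eliminates at least one imbalanced learner, lower-bound $\mathrm{OPT}$ by a degree count, and compare. Your version is in fact tighter and more careful than the paper's, which works with the total learner count $p$ (bounding greedy by $p-1$ and $\mathrm{OPT}$ by $p/2$) rather than with $s$ and $d$ separately; your parametrization cleanly handles instances with already-balanced learners and yields a strict $<2$ ratio rather than the paper's $\approx 2$.
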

\begin{proof} 
In the worst case, the number of messages sent by Algorithm~\ref{alg:balance} is at most $p-1$ as each heap-find-max operation on $\mathcal{H}_s$ fixes one imbalanced learner, and the minimum of messages sent is $p/2$,  the approximation ratio is $\frac{p-1}{p/2} \approx 2$.
\end{proof}

We extend the performance model described in Section~\ref{sec:performance-model} to incorporate locality-aware data loading method. The training time and preprocessing time are the same as previously described. We focus on the sample I/O time here, since it dominates the cost when $p$ is large. Two new parameters are needed:

\begin{itemize}
  \item[$R_{b}$:] the I/O rate of data movements for load balancing. If we choose to load the samples from remote caches, we can let $R_{b} = R_{c}$.
  \item[$\beta$:] the load balancing traffic volume ratio to the a given dataset size. 
\end{itemize}

The sample I/O time using the locality-aware data loading method is:
\begin{align}
\textsf{Sample I/O time} &= 
\underbrace{\frac{(1-\alpha) \cdot D}{R}}_{\textsf{Storage system}} + 
\underbrace{\frac{\alpha \cdot D }{R_{b}} \cdot \beta}_{\textsf{Load balancing cost}}  \label{eq:sample-load-time-locality-aware} 
\end{align}

From the previous analyses, we know that $\beta$ is a small number (i.e. $ 0 \leq\beta \ll 1$) because load imbalance is unlikely to be large. We can see that (\ref{eq:sample-load-time-locality-aware})  differs from (\ref{eq:sample-load-time-caching}) only in the second term. When $p$ is large, $\frac{p-1}{p} \approx 1 \gg \beta $, thus, compared with distributed caching, the locality-aware data-loading method greatly reduces the I/O cost.

\section{Experiments}\label{sec:experiments}

\begin{figure*}[t]
	\begin{minipage}{.40\textwidth}
		\centering
		\includegraphics[width=\textwidth]{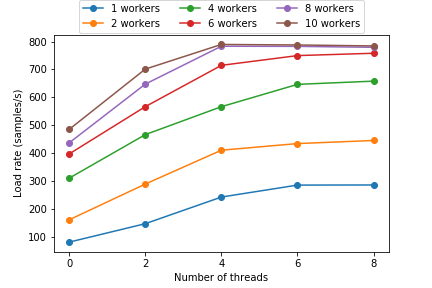}
		\caption{The Imagenet-1K sample loading rate of a single learner using different workers/threads combinations.}
		\label{fig:workers-threads-measurement}
	\end{minipage}
	\hspace{6em}
	\begin{minipage}{.40\textwidth}
	    \centering
		\includegraphics[width=\textwidth]{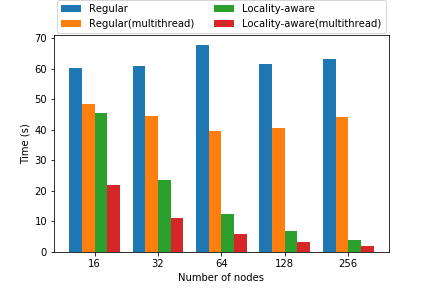}
		\caption{Cost to collectively load the Imagenet-1K dataset in different scales.}
		\label{fig:reg-vs-locality}
	\end{minipage}
\end{figure*} 	

We conducted  experiments on Lawrence Livermore National Lab's Lassen system using up to 256 nodes (1,024 GPUs). A compute node has two IBM POWER9 processors (44 cores in total), 256 GB system memory, 4 Nvidia V100 (Volta) GPUs, 16 GB memory per GPU, and Inifiniband EDR interconnect among compute nodes. Compute nodes have access to IBM Spectrum Scale (GPFS), a high-performance parallel file system. 

We studied a PyTorch implementation of Imagenet-1K classification using Resnet50 adopted from PyTorch examples~\cite{pytorchexamples2019}. The dataset, Imagenet-1K, contains around 1.28 million JPEG images, each is several hundred KBs. The total dataset size is about 150 GB. The distributed implementation spawns multiple learner processes, each associated with a GPU. The learners execute in a data-parallel fashion and synchronize with each other using NCCL library, which provides an optimized all-reduce operation for the synchronizations in training. 

We also include the data loading performance results of the UCF101 dataset~\cite{UCF101}. The dataset was originally videos and was converted into two image datasets: RGB and optical flow (referred to as FLOW) with approximately 2.5 million and 5 million images of average sizes 24.2 KB and 4.6 KB respectively.  We conducted data loading only (including I/O and video transformations) experiments with the optimized data loader to see how it performs for datasets other than Imagenet-1K.
 
To understand how our approach performs in loading very-large datasets, we used another 892 GB dataset generated from molecular dynamics (MD) simulations conducted using Multiscale Machine-Learned Modeling Infrastructure (MuMMI)~\cite{dinatale2019}. The dataset contains \textasciitilde 7M files that are derived MD trajectory frames. Each file contains a single frame of a constant size, 131 KB. The frames are stored in numpy array format and can be used in ML training directly after data loading. In other words, no sample pre-processing is required.

\subsection{Effects of Optimizations}\label{sec:effects-of-optimizations}

We examined the Imagenet-1K sample loading rate running a single data loading only learner (i.e. no training) with different numbers of workers and threads per worker to find a good combination. The case with zero thread (i.e. multithreading off) is the default PyTorch data loader. As shown in Figure~\ref{fig:workers-threads-measurement}, in general, the loading rate increases both with more threads and more workers. Our multithreading optimization granted better performance with relatively fewer workers, which is preferable because the overhead of spawning more workers increases quickly. The maximum loading rate measured is around 800 samples per second. 

Next, we compared the results of the regular PyTorch data loader with those of our locality-aware data loader. In each compute node, we created 4 learners to associate with 4 GPUs individually, and we let each learner spawn 10 background workers for it grants maximum sample loading rate in the previous experiment. The cache size of each learner is upper bounded at 25 GB but in most cases they use less than that for that we populate at most $\frac{1}{p}$ of the dataset per learner in the first epoch without cache replacement afterwards. The regular data loader read samples of a designated slice from a randomly permuted global mini-batch sequence in every step, while the locality-aware data loader went through a global mini-batch sequence, determined its contribution, and trained with mostly locally cached samples. We let the locality-aware data loader train with balanced local batches (using Algorithm~\ref{alg:balance}) to avoid negative  effects of stragglers. 

We ran a set of experiments in different scales. We removed the per-step synchronizations and kept only one in the end of each epoch. We experimented both with multithreading (4 threads each worker) and without multithreading to see if parallelized preprocessing helps the overall performance. We report the average time spent per-epoch excluding the first epoch, in which the caches were populated in the locality-aware method.

\begin{figure*}[t]
	\begin{minipage}{.40\textwidth}
		\centering
		\includegraphics[width=\textwidth]{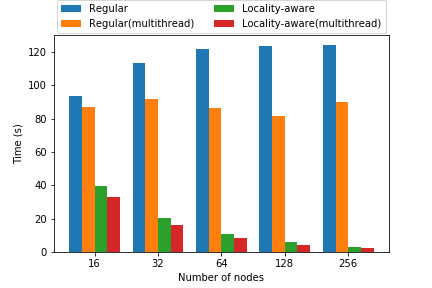}
		\caption{Cost to collectively load the UCF101-RGB dataset in different scales.}
		\label{fig:rgb-reg-vs-locality}
	\end{minipage}
	\hspace{6em}
	\begin{minipage}{.40\textwidth}
		\centering
		\includegraphics[width=\textwidth]{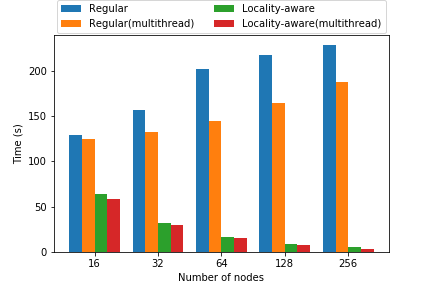}
		\caption{Cost to collectively load the UCF101-FLOW dataset in different scales. }
		\label{fig:flow-reg-vs-locality}
	\end{minipage}
\end{figure*}

\begin{figure}[t]
\includegraphics[width=.40\textwidth]{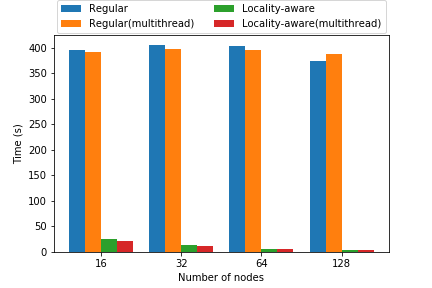}
\caption{Cost to collectively load the MuMMI dataset in different scales.}
\label{fig:mummi-reg-vs-locality}
\end{figure}

The results of loading Imagenet-1K in Figure~\ref{fig:reg-vs-locality} show that for the regular data loader, the cost did not decrease while scaling up. The slightly higher cost at 64-node was very likely caused by interference from other jobs. Regardless, since locality-aware data loaders fetched drastically fewer samples, data loading scales well with the number of learners.

In most configurations, for the same data loader and the same scale, a multithreaded trial finished sooner than a single-threaded one. We see a general improvement due to multithreading in all different numbers of compute nodes. For the regular data loader, multithreaded runs are 24\%--71\% faster; For the locality-aware data loader, multithreaded runs are 105\%--113\% faster. 

However, multithreading alone did not help in scaling up to more nodes when using the regular data loader. In contrast, the locality-aware data loading method clearly improved the scalability. Once the effective I/O bandwidth stopped scaling for the regular data loader --- as can be seen from the fact that the time did not reduce further even though more nodes participated --- the locality-aware data loader outperformed significantly, due to its lowered bandwidth requirement from reusing cached samples. At 256 nodes (1,024 learners), the locality-aware data loader achieved close to 34x speedup over the regular data loader at the same node count.

Figure~\ref{fig:rgb-reg-vs-locality} and~\ref{fig:flow-reg-vs-locality} show the results of loading the two sets from UCF101. The performance trend looks slightly different from Imagenet-1K results for the regular data loader. Without multithreading, the regular data loader spent more time per epoch to load both UCF101-RGB and UCF101-FLOW, albeit the volume to load per learner decreased as more compute nodes participated in loading. This trend also appeared in in loading UCF101-FLOW with multithreading. 
Since our jobs did not have exclusive access to the cluster during the experiment, we attribute the performance degradation to the interference of other jobs executing simultaneously that loaded from the GPFS. This phenomenon showcases again that at a very large scale, the conventional way of data loading does not scale.

In contrast to the regular data loader, our locality-aware data loader granted much better performance results for the data loading tasks of UCF101 in different scales. For UCF101-RGB, our optimized data loader is 2.8x--55.5x faster and for UCF101-FLOW, it is 2.2x--60.6x faster. 

For the largest dataset MuMMI, we can observe even more encouraging results using the locality-aware loading method as shown in Figure~\ref{fig:mummi-reg-vs-locality}. Our optimized data loader provides 18x, 35x, 70x, and 120x speedup over the regular data loader at 16, 32, 64 and 128 nodes correspondingly. We can see that the multithreading optimization does not affect the performance in a significant way, since the samples are numpy arrays and no pre-processing is needed after loading to the DRAM. 

\subsection{Imagenet-1K ResNet50 Training}\label{sec:imagenet-training}

We ran the Imagenet-1K classification using ResNet50 model to measure the performance results and the validation accuracy after 90 epochs in three different scales to compare the two data loader implementations. We enabled multithreading (4 threads per worker) in all runs. As we scaled up the distributed training, we also increased the global mini-batch size. We tried to reproduce the validation accuracy in~\cite{goyal2017accurate} for 8K mini-batch using the same fine-tuning techniques including batch normalization. For larger batch sizes, some of the known highest accuracy numbers achieved involve LARS~\cite{You_2018} and elaborate learning rate tuning, since our goal is not to achieve the highest accuracy, but to show comparable accuracy results, we did not implement those. In Table~\ref{tab:imagenet-accuracy},  we present the results. Using the locality-aware data loader resulted in comparable validation accuracy with that of the regular PyTorch data loader, as the differences are  below 1\%.

\begin{table}[t]
\centering
\caption{Imagenet-1K ResNet50 validation accuracy comparison between the regular data loader and the locality-aware data loader.}
\label{tab:imagenet-accuracy}
\begin{tabular}{c|r|c|c}
\hline
 \thead{Number of \\ nodes} &  \thead{Mini-batch \\ size} & \thead{Regular loader\\(\%)} & \thead{Locality-aware\\loader(\%)} \\
 \hline
 16 &  8,192     & 76.67 & 76.81 \\
 32 & 16,384     & 75.33 & 75.12 \\
 64 & 32,768     & 68.69 & 69.54 \\
 \hline
\end{tabular}
\end{table}

Figure~\ref{fig:imagenet-avg-epoch-time} shows the average time per-epoch of the runs. With training on GPUs, the data loading overhead should be hidden except for when $p$ is large. For 16 nodes, the GPU training time dominated the cost, and the time per-epoch was comparable between the two different loaders. For 32 and 64 nodes, the time per-epoch using the regular data loader was lower-bounded by the constant data loading cost, which was limited by the  I/O rate. In contrast, using the locality-aware data loader helped the per-epoch training time to decrease further when more nodes participated. We observe 1.9x speedup over regular data loader at 64 nodes (256 learners). The results prove that our locality-aware data loading method works well in practice. 

\begin{figure}[t]
\centering
\includegraphics[width=.40\textwidth]{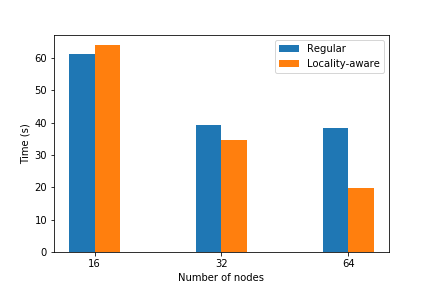}
\caption{Average epoch time of Imagenet-1K ResNet50 training in different number of nodes.}
\label{fig:imagenet-avg-epoch-time}
\end{figure}

\section{Related Work}\label{sec:related-work}

Several papers ~\cite{You_2018,jia2018highly,ying2018image,mikami2018imagenetresnet50} addressed the topic of optimizing Imagenet classification with ResNet50 on distributed systems. They mostly aimed to get a similar validation accuracy (75\% after 90 epochs) to Goyal's work~\cite{goyal2017accurate} while improving the total training time. Various novel methods that improve GPU computation time and synchronization time were proposed, but they often omit the data loading problem. 

There have been mentions of the data loading problem and attempts to solve it on very large-scale deployments. In DeepIO~\cite{zhu2018entropy}, data servers store subsets of the training dataset in an in-memory cache and prioritize reuse of data from the in-memory cache. While this reduces the accesses to the storage system, the mechanism can change the mini-batch sequences and impact the model accuracy. In comparison, our method does not change the predefined mini-batch sequences. In~\cite{kurth2018exascale}, distributed caching successfully scaled the application to 4,560 nodes. It relies on the high-speed interconnect among compute nodes to reduce the data loading from the storage system and lower the I/O cost, but the total volume of data movement among compute nodes remains high. Our locality aware method complements distributed caching by reducing the data movement to a small fraction, which can make applications scale to even larger systems.

\section{Conclusion}\label{sec:conclusion}

Efficient data loading is fundamental for distributed DNN training to scale to large-scale HPC systems. We investigated the issues of the existing data loader design and proposed performance optimizations. We also identified that, by both performance modeling and empirical results, the inability to load data faster limits the scalability of distributed mini-batch SGD. Our locality-aware data loading method utilizes caches to potentially eliminate the data loading from the storage system after the first epoch, and also reduces the total data loading volume to a tiny fraction of the input dataset size. Thus, the method  lowers the bandwidth requirement effectively and makes distributed DNN training much more scalable. Our experiments show that with the proposed optimizations, we can speed up the data loading of 1,024 learners to 34x, 55x and 60x for Imagenet-1K, UCF101-RGB and UCF101-FLOW, respectively. We can also get 120x speedup for loading a 892 GB MuMMI dataset using 512 learners. Applying the optimizations to the practical Imagenet-1K classification task also shows that simply using our data loader granted $\approx$ 2x speedup with 1,024 learners while gaining comparable validation accuracy results. 

Our prototype implementation is based on PyTorch. We plan to develop a general software package of the optimized data loader  that can be used with any machine learning frameworks. 
We also plan to study the feasibility of applying our methods to other machine learning optimization methods.
We also want to explore using SSD which provides ample space and fast access, and is ideal for a hierarchical caching design.

\section*{Acknowledgment}
This work was supported under CORAL NRE Contract B604142.

\bibliographystyle{IEEEtran}
\bibliography{main}

\begin{thebibliography}{10}
\providecommand{\url}[1]{#1}
\csname url@samestyle\endcsname
\providecommand{\newblock}{\relax}
\providecommand{\bibinfo}[2]{#2}
\providecommand{\BIBentrySTDinterwordspacing}{\spaceskip=0pt\relax}
\providecommand{\BIBentryALTinterwordstretchfactor}{4}
\providecommand{\BIBentryALTinterwordspacing}{\spaceskip=\fontdimen2\font plus
\BIBentryALTinterwordstretchfactor\fontdimen3\font minus
  \fontdimen4\font\relax}
\providecommand{\BIBforeignlanguage}[2]{{%
\expandafter\ifx\csname l@#1\endcsname\relax
\typeout{** WARNING: IEEEtran.bst: No hyphenation pattern has been}%
\typeout{** loaded for the language `#1'. Using the pattern for}%
\typeout{** the default language instead.}%
\else
\language=\csname l@#1\endcsname
\fi
#2}}
\providecommand{\BIBdecl}{\relax}
\BIBdecl

\bibitem{jouppi2017datacenter}
N.~P. Jouppi \emph{et~al.}, ``In-datacenter performance analysis of a tensor
  processing unit,'' in \emph{Computer Architecture (ISCA), 2017 ACM/IEEE 44th
  Annual International Symposium on}.\hskip 1em plus 0.5em minus 0.4em\relax
  IEEE, 2017, pp. 1--12.

\bibitem{chetlur2014cudnn}
S.~Chetlur, C.~Woolley, P.~Vandermersch, J.~Cohen, J.~Tran, B.~Catanzaro, and
  E.~Shelhamer, ``cudnn: Efficient primitives for deep learning,'' \emph{arXiv
  preprint arXiv:1410.0759}, 2014.

\bibitem{mkldnn2019}
Intel, ``{MKL-DNN},'' \url{https://github.com/intel/mkl-dnn}, 2019.

\bibitem{jia2014caffe}
Y.~Jia, E.~Shelhamer, J.~Donahue, S.~Karayev, J.~Long, R.~Girshick,
  S.~Guadarrama, and T.~Darrell, ``Caffe: Convolutional architecture for fast
  feature embedding,'' \emph{arXiv preprint arXiv:1408.5093}, 2014.

\bibitem{paszke2017automatic}
A.~Paszke, S.~Gross, S.~Chintala, G.~Chanan, E.~Yang, Z.~DeVito, Z.~Lin,
  A.~Desmaison, L.~Antiga, and A.~Lerer, ``Automatic differentiation in
  pytorch,'' in \emph{NIPS-W}, 2017.

\bibitem{tensorflow2015whitepaper}
\BIBentryALTinterwordspacing
M.~Abadi \emph{et~al.}, ``{TensorFlow}: Large-scale machine learning on
  heterogeneous systems,'' 2015, software available from tensorflow.org.
  [Online]. Available: \url{https://www.tensorflow.org/}
\BIBentrySTDinterwordspacing

\bibitem{ILSVRC15}
O.~Russakovsky \emph{et~al.}, ``{ImageNet Large Scale Visual Recognition
  Challenge},'' \emph{International Journal of Computer Vision (IJCV)}, vol.
  115, no.~3, pp. 211--252, 2015.

\bibitem{he2016deep}
K.~He, X.~Zhang, S.~Ren, and J.~Sun, ``Deep residual learning for image
  recognition,'' in \emph{Proceedings of the IEEE conference on computer vision
  and pattern recognition}, 2016, pp. 770--778.

\bibitem{goyal2017accurate}
P.~Goyal, P.~Doll{\'a}r, R.~Girshick, P.~Noordhuis, L.~Wesolowski, A.~Kyrola,
  A.~Tulloch, Y.~Jia, and K.~He, ``Accurate, large minibatch sgd: training
  imagenet in 1 hour,'' \emph{arXiv preprint arXiv:1706.02677}, 2017.

\bibitem{You_2018}
\BIBentryALTinterwordspacing
Y.~You, Z.~Zhang, C.-J. Hsieh, J.~Demmel, and K.~Keutzer, ``Imagenet training
  in minutes,'' \emph{Proceedings of the 47th International Conference on
  Parallel Processing - ICPP 2018}, 2018. [Online]. Available:
  \url{http://dx.doi.org/10.1145/3225058.3225069}
\BIBentrySTDinterwordspacing

\bibitem{jia2018highly}
X.~Jia \emph{et~al.}, ``Highly scalable deep learning training system with
  mixed-precision: Training imagenet in four minutes,'' 2018.

\bibitem{ying2018image}
C.~Ying, S.~Kumar, D.~Chen, T.~Wang, and Y.~Cheng, ``Image classification at
  supercomputer scale,'' 2018.

\bibitem{mikami2018imagenetresnet50}
H.~Mikami, H.~Suganuma, P.~U-chupala, Y.~Tanaka, and Y.~Kageyama,
  ``Imagenet/resnet-50 training in 224 seconds,'' 2018.

\bibitem{dekel2012optimal}
O.~Dekel, R.~Gilad-Bachrach, O.~Shamir, and L.~Xiao, ``Optimal distributed
  online prediction using mini-batches,'' \emph{Journal of Machine Learning
  Research}, vol.~13, no. Jan, pp. 165--202, 2012.

\bibitem{Kurth_2017}
\BIBentryALTinterwordspacing
T.~Kurth \emph{et~al.}, ``Deep learning at 15pf,'' \emph{Proceedings of the
  International Conference for High Performance Computing, Networking, Storage
  and Analysis on - SC ’17}, 2017. [Online]. Available:
  \url{http://dx.doi.org/10.1145/3126908.3126916}
\BIBentrySTDinterwordspacing

\bibitem{kurth2018exascale}
T.~Kurth, S.~Treichler, J.~Romero, M.~Mudigonda, N.~Luehr, E.~Phillips,
  A.~Mahesh, M.~Matheson, J.~Deslippe \emph{et~al.}, ``Exascale deep learning
  for climate analytics,'' in \emph{Proceedings of the International Conference
  for High Performance Computing, Networking, Storage, and Analysis}.\hskip 1em
  plus 0.5em minus 0.4em\relax IEEE Press, 2018, p.~51.

\bibitem{alex2018horovod}
A.~Sergeev and M.~D. Balso, ``Horovod: fast and easy distributed deep learning
  in tensorflow,'' 2018.

\bibitem{pywikiglobalinterpreterlock}
\BIBentryALTinterwordspacing
{Python Wiki contributors}, ``Global interpreter lock,'' 2019, [Online;
  accessed 01-March-2019]. [Online]. Available:
  \url{https://wiki.python.org/moin/GlobalInterpreterLock}
\BIBentrySTDinterwordspacing

\bibitem{raab1998balls}
M.~Raab and A.~Steger, ``Balls into bins—a simple and tight analysis,'' in
  \emph{International Workshop on Randomization and Approximation Techniques in
  Computer Science}.\hskip 1em plus 0.5em minus 0.4em\relax Springer, 1998, pp.
  159--170.

\bibitem{chen2008minimum}
X.~Chen, L.~Liu, Z.~Liu, and T.~Jiang, ``On the minimum common integer
  partition problem,'' \emph{ACM Transactions on Algorithms (TALG)}, vol.~5,
  no.~1, p.~12, 2008.

\bibitem{pytorchexamples2019}
Facebook, ``pytorch/examples,'' \url{https://github.com/pytorch/examples},
  2019.

\bibitem{UCF101}
K.~Soomro, A.~Roshan~Zamir, and M.~Shah, ``{UCF101}: A dataset of 101 human
  actions classes from videos in the wild,'' in \emph{CRCV-TR-12-01}, 2012.

\bibitem{dinatale2019}
F.~Di~Natale \emph{et~al.}, ``A massively parallel infrastructure for adaptive
  multiscale simulations: Modeling {RAS} initiation pathway for cancer,'' in
  \emph{Supercomputing: The International Conference for High Performance
  Computing, Networking, Storage, and Analysis (To Appear)}.\hskip 1em plus
  0.5em minus 0.4em\relax ACM, 2019.

\bibitem{zhu2018entropy}
Y.~Zhu, F.~Chowdhury, H.~Fu, A.~Moody, K.~Mohror, K.~Sato, and W.~Yu,
  ``Entropy-aware i/o pipelining for large-scale deep learning on hpc
  systems,'' in \emph{2018 IEEE 26th International Symposium on Modeling,
  Analysis, and Simulation of Computer and Telecommunication Systems
  (MASCOTS)}.\hskip 1em plus 0.5em minus 0.4em\relax IEEE, 2018, pp. 145--156.

\end{thebibliography}

\end{document}